\documentclass[preprint]{article}
\makeatletter
\@ifundefined{abx@aux@refcontext}{\providecommand{\abx@aux@refcontext}[1]{}}{}
\makeatother

\usepackage{icml2026}
\usepackage{natbib}
\usepackage{threeparttable}
\usepackage{multirow}
\usepackage{color}
\usepackage{diagbox}
\usepackage{amsmath,amssymb,amsthm}
\newtheorem{theorem}{Theorem}
\newtheorem{lemma}{Lemma}
\newtheorem{corollary}{Corollary}

\newcommand{\E}{\mathbb{E}}
\newcommand{\R}{\mathbb{R}}
\newcommand{\norm}[1]{\left\lVert #1 \right\rVert}
\usepackage{mathtools} 
\usepackage{enumitem}
\usepackage{caption}
\usepackage{caption,subcaption}

\usepackage[utf8]{inputenc} 
\usepackage[T1]{fontenc}    
\usepackage{hyperref}       
\usepackage{url}         
\usepackage{booktabs}       
\usepackage{amsfonts}       
\usepackage{nicefrac}       
\usepackage{microtype}      
\usepackage{xcolor}         
\usepackage{wrapfig}
\usepackage[textsize=tiny]{todonotes}

\newcommand{\lmNum}[0]{5}

\definecolor{DarkGreen}{RGB}{1,100,32}
\definecolor{DarkRed}{RGB}{158,19,22}

\newcommand{\Answer}[2]{\noindent \textbf{Answer to #1:} #2}
\usepackage{tikz}
\definecolor{customblue}{RGB}{192, 208, 235}
\definecolor{customdarkblue}{RGB}{68, 114, 196}

\begin{document}

\twocolumn[
  \icmltitle{An Empirical Study and Theoretical Explanation on Task-Level Model-Merging Collapse}

  \begin{icmlauthorlist}
    \icmlauthor{Yuan Cao}{scs}
    \icmlauthor{Dezhi Ran$^*$$^\dagger$}{scs}
    \icmlauthor{Yuzhe Guo}{scs}
    \icmlauthor{Mengzhou Wu}{scs}
    \icmlauthor{Simin Chen}{cbu}
    \icmlauthor{Linyi Li}{sfu}
    \icmlauthor{Wei Yang}{utdallas}
    \icmlauthor{Tao Xie$^*$}{scs}
  \end{icmlauthorlist}

  \icmlaffiliation{scs}{Key Lab of HCST (PKU), MOE; SCS, Peking University, Beijing, China}
  \icmlaffiliation{cbu}{Computer Science Department at Columbia University, New York, USA}
  \icmlaffiliation{utdallas}{Department of Computer Science, University of Texas at Dallas, Richardson, USA}
  \icmlaffiliation{sfu}{School of Computing Science, Simon Fraser University, Burnaby, BC, Canada}

  \icmlcorrespondingauthor{Tao Xie}{taoxie@pku.edu.cn}
  \icmlcorrespondingauthor{Dezhi Ran}{dezhiran@pku.edu.cn}

\icmlkeywords{GUI Agents, Vision-Language Models, Synthetic Environment}
  \vskip 0.3in
]
\printAffiliationsAndNotice{\textsuperscript{*}Co-corresponding authors. \textsuperscript{$\dagger$}Project Leader.}


\begin{abstract}
Model merging unifies independently fine-tuned LLMs from the same base, enabling reuse and integration of parallel development efforts without retraining.
However, in practice we observe that merging does not always succeed: certain combinations of task-specialist models suffer from catastrophic performance degradation after merging. We refer to this failure mode as  merging collapse. Intuitively, collapse arises when the learned representations or parameter adjustments for different tasks are fundamentally incompatible, so that merging forces destructive interference rather than synergy.
In this paper, we identify and characterize the phenomenon of task-level merging collapse, where certain task combinations consistently trigger huge performance degradation across all merging methods. 
Through extensive experiments and statistical analysis, we demonstrate that representational incompatibility between tasks is strongly correlated with merging collapse, while parameter-space conflict metrics show minimal correlation, challenging conventional wisdom in model merging literature.
We provide a theoretical explanation on this phenomenon through rate-distortion theory with a dimension-dependent bound, establishing fundamental limits on task mergeability regardless of methodology.

\end{abstract}

\section{Introduction}\label{sec::intro}
Large language models (LLMs)~\cite{brown2020language, bommasani2021opportunities,achiam2023gpt, grattafiori2024llama3herdmodels,yang2024qwen2,guo2025deepseek} have demonstrated remarkable success in numerous downstream tasks from natural language processing~\cite{wang2018glue,hendrycks2020measuring} to code generation~\cite{lu2021codexglue,chen2021evaluating,jain2024livecodebench,li2024infibench}.
While LLMs achieve strong general performance~\cite{achiam2023gpt, grattafiori2024llama3herdmodels,yang2024qwen2,guo2025deepseek}, adapting them to specific domains through fine-tuning remains computationally intensive~\cite{yadav2024ties}, particularly when multiple specialized variants are required for different tasks or applications~\cite{wolf2019huggingface,yadav2024ties}.
This challenge has spurred interests in parameter-efficient methods~\cite{hu2022lora,yadav2024ties} to combine and adapt pretrained models without exhaustive retraining.

Among emerging solutions, model merging~\cite{zhang2019malmem,wortsman2022model,ilharco2023editingmodelstaskarithmetic,yadav2024ties,yu2024language,lu2024twin} has shown particular promise as a computationally efficient approach to combine multiple fine-tuned LLMs derived from the same base model.
By merging model parameters directly, model merging aims to consolidate specialized capabilities into a single unified model while avoiding the costs of additional training. 
By directly combining model parameters, merging leverages the widely observed linear mode connectivity~\cite{frankle2020linear, zhou2024emergence} between fine-tuned variants to create multi-capability models. Task arithmetic methods~\cite{zhang2019malmem, wortsman2022model, ilharco2023editingmodelstaskarithmetic} and their recent extensions~\cite{yadav2024ties, yu2024language, lu2024twin} have shown particular effectiveness for specific task combinations. However, a critical question remains underexplored: What fundamental limitations govern which tasks can be successfully merged without model-merging collapse (defined in Section~\ref{sec:preliminary})?

In this paper, we identify and characterize the phenomenon of task-level merging collapse. 
Through comprehensive empirical analysis spanning five state-of-the-art merging methods~\cite{zhang2019malmem,wortsman2022model,ilharco2023editingmodelstaskarithmetic,yadav2024ties,yu2024language}, diverse LLM architectures~\cite{grattafiori2024llama3herdmodels,yang2024qwen2}, and fine-tuning approaches~\cite{howard2018universal,hu2022lora}, we observe a consistent and alarming pattern: model merging can catastrophically fail when combining certain task combinations, despite each individual model performing well in isolation.
Contrary to the prevailing focus on parameter conflicts~\cite{yadav2024ties}, our statistical analysis reveals that parameter-space metrics show minimal correlation.

To explain this phenomenon and find better correlated metric, we introduce the first theoretical framework for analyzing model merging through the lens of rate-distortion theory~\cite{berger2003rate}.
Under the Locally Modified Components (LMC) assumption, we prove a fundamental bound relating merging distortion to the geometry of hidden representations.
Our dimension-dependent theorem establishes that for representations in $R^d$, no convex merging method can achieve distortion below $\Delta^2\cdot\frac{d}{2(d+1)}$, where $\Delta$ is the diameter of task-specific representation clusters.
Our theoretical analysis aligns with the empirical findings, where hidden-state representational incompatibility at the task level is strongly correlated with merging collapse, providing a better understanding of what drives merging degradation and actionable insights for guiding task selection in improving model merging.

In summary, our main findings and contributions are as follows:

\begin{itemize}
    \item \textbf{Task-level representational incompatibility drives merging collapse.} We demonstrate that merging failure is fundamentally determined by task compatibility rather than methodology, with certain task combinations consistently causing catastrophic performance degradation across all merging approaches. Through controlled experiments, we establish that task-level representational conflicts strongly predict merging failure, while parameter-space conflict metrics show minimal correlation, challenging conventional wisdom in model merging literature.

    \item \textbf{Dimension-aware theoretical framework.} We formalize task-level merging collapse using information theory, proving that for representations in $\mathbb{R}^d$, the minimum achievable hidden-state distortion is bounded by $\Delta^2\cdot\frac{d}{2(d+1)}$ (Theorem~\ref{thm:diameter}). This dimension-dependent bound establishes precise quantitative limits on task mergeability, connecting empirical collapse to fundamental information-theoretic constraints.

    \item \textbf{Empirical validation of theoretical explanation.} We validate our theoretical predictions through extensive experiments across diverse model architectures, tasks, and merging methods. Our results confirm that representation incompatibility statistically correlates with merging collapse, with observed merging collapse following the theoretical explanations. These findings provide a principled explanation for previously puzzling cases of merging collapse.

\end{itemize}

\section{PRELIMINARY}
\label{sec:preliminary}

The development lifecycle of LLMs~\cite{bommasani2021opportunities} typically follows a two-stage paradigm: pre-training and fine-tuning. While the pre-trained model develops a wide-ranging understanding, the representations learned during pre-training are not optimized for any specific application, necessitating further fine-tuning on task-specific data with targeted objectives.

Let $M_{\theta}$ denote a Foundation Model (FM) with trainable parameters $\theta$, which can represent either the entire model for full fine-tuning or a subset of parameters for parameter-efficient fine-tuning techniques~\cite{hu2022lora, liu2022few, houlsby2019parameter}. 
In collaborative FM development environments, multiple teams often independently fine-tune the same base model for different capabilities or on different data distributions ~\cite{touvron2023llama}, resulting in a collection of specialized models.
Consider a set of tasks $\mathcal{T} = \{t_1, t_2, ..., t_n\}$. 
The based FM $M_{\theta_0}$ is fine-tuned on each task $t_i$, resulting in fine-tuned model $M_{\theta_i}$ with updated parameters $\theta_i$. 
The distributed development creates the need for effective approaches to consolidate these independently evolved models.

\textbf{Model merging.} 
Building upon the independent fine-tuning efforts described above, there arises a crucial need to integrate the strengths of multiple specialized models.
Model merging~\cite{wortsman2022model,zhang2019malmem,ilharco2023editingmodelstaskarithmetic,chitale2023taskarithmeticloracontinual,ortizjimenez2023taskarithmetictangentspace,yadav2024ties,yu2024language,yang2024model} refers to the process of combining multiple independently fine-tuned models with their parameter updates into a unified model that preserves the capabilities of its constituents.
Given the updated parameters $\{\theta_1,...,\theta_n\}$ obtained from fine-tuning the common base model $M_{\theta_0}$, the goal of model merging is to produce a consolidated model $M_{\theta_{merged}}$ that preserves the task-specific knowledge from each fine-tuned model. 

In order to achieve merging task-specific knowledge, many existing works~\cite{hendel2023context, hojel2024finding, ilharco2023editingmodelstaskarithmetic,chitale2023taskarithmeticloracontinual,ortizjimenez2023taskarithmetictangentspace} leverage \textit{Task vectors}, which provides a concise representation of the model update resulting from fine-tuning on a specific task.
Formally, let $\theta_{0} \in \mathbb{R}^d$ denote the pre-trained model weights, and $\theta_t \in \mathbb{R}^d$ the weights after fine-tuning on task $t$.
The task vector is defined as $\tau_t = \theta_t - \theta_0$, capturing both the direction and magnitude of parameter changes induced by fine-tuning.
In other words, $\tau_t$ encodes the element-wise modifications made to the model parameters for adapting to task $t$.  
Furthermore, task vectors enable a straightforward approach to model merging: by averaging the task vectors from $n$ tasks, one can construct a merged model as $\theta_{merged} = \theta_0 + (\tau_1 + \tau_2 + \dots + \tau_n)/n$.

\textbf{Parameter update conflicts.}
When merging models fine-tuned on different tasks, conflicts often arise within the parameter updates~\cite{yadav2024ties,yang2024model}.
A direct and intuitive form of conflict occurs when the elements in task vectors for different tasks have opposite signs, implying that the optimal parameter update for one task may be detrimental to another~\cite{yadav2024ties}.
However, even in the absence of sign differences, significant disparities in the magnitudes of updates can also lead to suboptimal performance in the merged model. 

Based on established literature~\cite{huang2024emr,yadav2024ties,ilharco2023editingmodelstaskarithmetic}, we focus on these four metrics to provide a comprehensive view of parameter update conflicts.

\begin{itemize}[left=8pt, itemindent=8pt, labelsep=0pt, parsep=0pt]
    \item \textbf{Parameter magnitude change ratio} is the ratio between the sum of the difference of each parameter of the same position and the sum of two models' task vectors' magnitude.
    
    \item \textbf{Parameter sign change ratio} is the ratio between the number of positions where two models' task vector not sharing a same sign and the total number of parameters positions.
    
    \item \textbf{Conflicting parameter magnitude change ratio} is the ratio between the difference of parameters at positions where two models' task vector not sharing a same sign and the sum of two models' task vectors' magnitude.

    \item \textbf{Average cosine similarity between pair's task vectors} stands for the average cosine similarity between two model's parameter vectors.

\end{itemize}

\textbf{Merging collapse}. Similar to model collapse training on synthesis data~\cite{shumailov2024ai},
we introduce the concept of merging collapse as phenomenon when distinct fine-tuned models cannot be successfully combined by a given model merging technique while preserving their original capabilities. 
Formally, let $P(\theta_i, T_i)$ denote the performance of model $M_{\theta_i}$ on task $T_i$, we quantify merging collapse using the \textbf{merging loss} $-100 \%\leq L(T_i) \leq 0 \%$ on each task $T_i$:
\begin{equation}\label{eqn::merge_loss}
    L(T_i) = (\frac{\text{P}(\theta_{\text{merged}}, T_i)}{\text{P}(\theta_i, T_i)}-1) \times 100 \%
\end{equation}
As for multiple tasks, we further define the average merging loss across all tasks as the arithmetic mean.

In the analysis that follows, we omit the percentage sign ``\%'' for simplicity.

\section{Empirical Investigation of Task-Level Model-Merging Collapse}\label{sec::empirical}
While previous work has demonstrated that model merging can effectively combine knowledge from multiple fine-tuned models, these studies typically focus on specific merging techniques~\cite{yadav2024ties,yu2024language} or specific model architecture~\cite{yadav2024matters}. 
A comprehensive understanding of when and why model merging succeeds or fail across different conditions remains an open challenge. 
To bridge the gap, we conduct a comprehensive empirical study answering the following research questions:

\begin{itemize}
    \item \textbf{RQ1:} How consistently does merging collapse occur across different merging techniques?
    \item \textbf{RQ2:} To what extent is merging collapse method-dependent or task-dependent? Do different merging techniques exhibit similar collapse patterns when confronted with the same task combinations?
    \item \textbf{RQ3:} What factor best correlates with merging compatibility? In particular, how do representation-space metrics (capturing task-level conflicts) compare to parameter-space metrics (measuring weight update conflicts) in explaining merging collapse?
\end{itemize}

\subsection{Study Setup}
\label{sec::setup}
\noindent\textbf{Models and datasets.} 
We randomly select 64 checkpoints from Lots-of-LoRAs Collection~\cite{bruel2024compress} to cover a diverse range of tasks, which are finetuned with LoRA~\cite{hu2022lora} from Mistral-7B~\cite{jiang2023mistral7b}.
We also choose eight tasks COLA, MNLI, MRPC, QNLI, QQP, RTE, SST-2 and WNLI from the GLUE dataset~\cite{wang2018glue}, which are widely used by previous work~\cite{yadav2024ties,lu2024twin} for evaluating model merging techniques.
We fine-tune \lmNum{} models including Llama3.2-3B~\cite{grattafiori2024llama3herdmodels}, Llama3.1-8B~\cite{grattafiori2024llama3herdmodels},
Qwen2.5-3B, 7B, and 14B~\cite{yang2024qwen2}, T5-Base, T5-Large, and T5-XL~\cite{raffel2023exploringlimitstransferlearning}.
These models cover different scales (from 300M to 14B), architectures (decoder-only and encoder-decoder), and training approaches (instruction tuning and task-specific fine-tuning) to improve the representativeness of our findings. 
We fine-tune these models on the eight GLUE tasks, resulting in 64 model checkpoints.

\noindent\textbf{Model merging techniques.}
We experiment with five state-of-the-art model merging techniques including \textbf{Linear Averaging (LA)}~\cite{zhang2019malmem,wortsman2022model}, \textbf{Task Arithmetic (TA)}~\cite{ilharco2023editingmodelstaskarithmetic,chitale2023taskarithmeticloracontinual,ortizjimenez2023taskarithmetictangentspace}, \textbf{TIES}~\cite{yadav2024ties}, \textbf{DARE}~\cite{yu2024language}, and \textbf{SLERP}~\cite{freeden1981spherical,goddard-etal-2024-arcees} implemented in MergeKit~\cite{goddard-etal-2024-arcees}.

\noindent\textbf{Merging settings.}
For GLUE tasks, we merge every 8 fine-tuned checkpoints from the same base model with all the five model merging techniques.
For model checkpoints from Lots-of-LoRAs, we generate 25 task groups (a)\texttildelow(y)of merging tasks, with each group containing 8 randomly selected checkpoints. We reproduce linear averaging, task arithmetic and TIES on these checkpoint groups. 

\noindent\textbf{Evaluation metrics.} 
In merging experiments of Lots-of-LoRAs, we measure the performance with rougeL~\cite{lin2004rouge} as Lots-of-LoRAs mentioned. In merging experiments of checkpoints fine-tuned on GLUE dataset, we measure the performance by classification accuracy.
We use the merging loss defined by Equation~\ref{eqn::merge_loss} in Section~\ref{sec:preliminary} to quantitatively measure the model mergeability.

\subsection{RQ1: Model Merging Collapse}

First, we conduct comprehensive experiments on GLUE tasks. Table~\ref{table::rq2::main::nlp} presents the merging loss range of existing merging techniques on GLUE tasks from which we have two major observations.

\noindent\textbf{All models suffer from severe mergeability issues in model merging.}
Regardless of model architecture, model size, or merging technique, every model examined exhibits substantial merging losses when combining multiple models simultaneously. 
\begin{table}[t]
\centering
\resizebox{\columnwidth}{!}{
\begin{threeparttable}
\caption{Merging Losses on GLUE tasks across Different Merging Techniques and Models.}
\label{table::rq2::main::nlp}
\begin{tabular}{c|ccccc}
\toprule
\multirow{2}{*}{\textbf{Model}} & \multicolumn{5}{c}{\textbf{Merging Techniques}} \\
\cline{2-6}
& \textbf{\rule{0pt}{2.5ex}LA} & \textbf{TA} & \textbf{TIES} & \textbf{DARE} & \textbf{SLERP} \\
\midrule
 Llama3.2-3B  & 17.3 $\pm$ 7.5 & 17.6 $\pm$ 7.8 & 21.8 $\pm$ 16.9 & 57.0 $\pm$ 27.5 & 17.5 $\pm$ 8.0 \\
 Llama3.1-8B  & 22.0 $\pm$ 22.2 & 22.0 $\pm$ 22.2 & 31.7 $\pm$ 19.9 & 66.1 $\pm$ 27.9 & 22.2 $\pm$ 23.0 \\
  Qwen2.5-3B  & 13.7 $\pm$ 15.1 & 13.6 $\pm$ 15.2 & 13.3 $\pm$ 13.0 & 27.1 $\pm$ 16.8 & 13.1 $\pm$ 14.7 \\
  Qwen2.5-7B  & 17.4 $\pm$ 23.3 & 17.0 $\pm$ 22.2 & 26.1 $\pm$ 24.3 & 62.6 $\pm$ 24.6 & 17.6 $\pm$ 22.6 \\
 Qwen2.5-14B  & 19.0 $\pm$ 26.3 & 19.1 $\pm$ 26.9 & 18.6 $\pm$ 25.7 & 31.9 $\pm$ 27.8 & 18.9 $\pm$ 26.3 \\
\midrule
T5-Base  & 29.0 $\pm$ 19.5 & 29.5 $\pm$ 20.5 & 31.6 $\pm$ 21.4 & 44.5 $\pm$ 13.1 & 27.7 $\pm$ 22.5 \\
 T5-Large  & 26.3 $\pm$ 24.9 & 34.4 $\pm$ 27.0 & 26.8 $\pm$ 23.5 & 28.2 $\pm$ 27.2 & 28.2 $\pm$ 27.2 \\
T5-XL  & 20.0 $\pm$ 18.8 & 19.4 $\pm$ 18.2 & 19.3 $\pm$ 18.7 & 21.2 $\pm$ 17.3 & 25.7 $\pm$ 23.3 \\
\bottomrule
\end{tabular}
\end{threeparttable}}
\end{table}

Even the best-performing combinations show double-digit merging losses, with values reaching to -32.8 across different configurations for GLUE tasks. This universal degradation demonstrates that current merging techniques fundamentally break down when scaled to practical multi-model scenarios that more closely resemble real-world deployment needs.

\noindent\textbf{No model merging technique overcomes the mergeability issue in model merging.}
While certain techniques perform marginally better in specific contexts, none successfully mitigates the fundamental mergeability problem at scale. This suggests that mergeability limitations represent an inherent challenge to scaled model merging rather than merely a technical limitation of current approaches.
The DARE technique exhibits a steepest increase in loss, which is found out to be due to its failure on certain task as previous work~\cite{deng2025dareextremerevisitingdeltaparameter} shows. 
Other techniques like LA, TA, TIES, and SLERP show more moderate but still significant loss, typically ranging from 10-25\%.

To further validate the generalization of model-merging collapse across different models and merging techniques, we conduct comprehensive experiments on Lots-of-LoRAs task groups. Out of the 25 Lots-of-LoRAs task groups we merge, 2/3 of them experience a worst performance loss of more than 30\% when merged, and only one group maintains performance within 10\% of the original models, demonstrating that severe merging collapse is the norm rather than the exception.

\noindent\fcolorbox{black}{gray!20}{
\begin{minipage}{\dimexpr\linewidth-2\fboxrule-2\fboxsep\relax}
\Answer{RQ1}{Model-merging collapse do exist for all merging techniques. Even the best-performing merging techniques show significant degradation. This suggests mergeability limitations are inherent to scaled model merging rather than technical shortcomings of specific technique.}
\end{minipage}
}

\subsection{RQ2: Method vs. Task Dependence in Merging Collapse}

\begin{table*}[t]
\footnotesize
\centering
\resizebox{2\columnwidth}{!}{
\begin{threeparttable}
\caption{Effectiveness of different merging methods on NLP Tasks with Qwen2.5 Models.}
\label{table::rq3::main}
\setlength{\tabcolsep}{3.4pt}
\begin{tabular}{l|l|cccccccccccccccc}
\toprule
\multirow{3}{*}{\textbf{Model}} & \multirow{3}{*}{\textbf{Merging}} & \multicolumn{16}{c}{\textbf{NLP Tasks}} \\
\cline{3-18}
& & \multicolumn{2}{c}{\textbf{COLA}} & \multicolumn{2}{c}{\textbf{MNLI}} & \multicolumn{2}{c}{\textbf{MRPC}} & \multicolumn{2}{c}{\textbf{QNLI}} & \multicolumn{2}{c}{\textbf{QQP}} & \multicolumn{2}{c}{\textbf{RTE}} & \multicolumn{2}{c}{\textbf{SST-2}} & \multicolumn{2}{c}{\textbf{WNLI}} \\
\cline{3-18}
& & \textbf{\rule{0pt}{2.5ex}FT} & \textbf{$M(\Delta)$} & \textbf{FT} & \textbf{$M(\Delta)$} & \textbf{FT} & \textbf{$M(\Delta)$} & \textbf{FT} & \textbf{$M(\Delta)$} & \textbf{FT} & \textbf{$M(\Delta)$} & \textbf{FT} & \textbf{$M(\Delta)$} & \textbf{FT} & \textbf{$M(\Delta)$} & \textbf{FT} & \textbf{$M(\Delta)$} \\
\midrule
\multirow{6}{*}{3B} & LA & 82.7 & 81.0 (\textcolor{red}{-2.1}) & 88.4 & 86.2 (\textcolor{red}{-2.4}) & 89.5 & 75.5 (\textcolor{red}{-15.6}) & 92.6 & 85.0 (\textcolor{red}{-8.2}) & 84.2 & 62.6 (\textcolor{red}{-25.7}) & 90.3 & 85.2 (\textcolor{red}{-5.6}) & 95.4 & 93.7 (\textcolor{red}{-1.8}) & 73.2 & 38.0 (\textcolor{red}{-48.1}) \\
 & TA & 82.7 & 81.2 (\textcolor{red}{-1.9}) & 88.4 & 86.2 (\textcolor{red}{-2.5}) & 89.5 & 75.5 (\textcolor{red}{-15.6}) & 92.6 & 85.1 (\textcolor{red}{-8.1}) & 84.2 & 62.6 (\textcolor{red}{-25.7}) & 90.3 & 85.6 (\textcolor{red}{-5.2}) & 95.4 & 93.6 (\textcolor{red}{-1.9}) & 73.2 & 38.0 (\textcolor{red}{-48.1}) \\
 & TIES & 82.7 & 79.2 (\textcolor{red}{-4.3}) & 88.4 & 87.8 (\textcolor{red}{-0.7}) & 89.5 & 83.6 (\textcolor{red}{-6.6}) & 92.6 & 67.8 (\textcolor{red}{-26.9}) & 84.2 & 63.2 (\textcolor{red}{-24.9}) & 90.3 & 87.7 (\textcolor{red}{-2.8}) & 95.4 & 91.9 (\textcolor{red}{-3.7}) & 73.2 & 46.5 (\textcolor{red}{-36.5}) \\
 & DARE & 82.7 & 70.4 (\textcolor{red}{-14.9}) & 88.4 & 35.4 (\textcolor{red}{-60.0}) & 89.5 & 82.6 (\textcolor{red}{-7.7}) & 92.6 & 50.6 (\textcolor{red}{-45.4}) & 84.2 & 63.2 (\textcolor{red}{-24.9}) & 90.3 & 80.9 (\textcolor{red}{-10.4}) & 95.4 & 73.7 (\textcolor{red}{-22.7}) & 73.2 & 50.7 (\textcolor{red}{-30.8}) \\
 & SLERP & 82.7 & 81.7 (\textcolor{red}{-1.3}) & 88.4 & 86.2 (\textcolor{red}{-2.4}) & 89.5 & 76.0 (\textcolor{red}{-15.1}) & 92.6 & 85.4 (\textcolor{red}{-7.8}) & 84.2 & 62.6 (\textcolor{red}{-25.6}) & 90.3 & 85.9 (\textcolor{red}{-4.8}) & 95.4 & 93.7 (\textcolor{red}{-1.8}) & 73.2 & 39.4 (\textcolor{red}{-46.2}) \\

\midrule
\multirow{6}{*}{7B} & LA & 86.1 & 82.5 (\textcolor{red}{-4.2}) & 90.8 & 86.1 (\textcolor{red}{-5.1}) & 88.0 & 74.0 (\textcolor{red}{-15.9}) & 95.3 & 88.3 (\textcolor{red}{-7.4}) & 88.0 & 63.2 (\textcolor{red}{-28.2}) & 89.9 & 88.1 (\textcolor{red}{-2.0}) & 96.0 & 94.5 (\textcolor{red}{-1.6}) & 78.9 & 19.7 (\textcolor{red}{-75.0}) \\
 & TA & 86.1 & 82.9 (\textcolor{red}{-3.7}) & 90.8 & 86.1 (\textcolor{red}{-5.1}) & 88.0 & 74.0 (\textcolor{red}{-15.9}) & 95.3 & 88.1 (\textcolor{red}{-7.6}) & 88.0 & 63.2 (\textcolor{red}{-28.2}) & 89.9 & 87.4 (\textcolor{red}{-2.8}) & 96.0 & 94.4 (\textcolor{red}{-1.7}) & 78.9 & 22.5 (\textcolor{red}{-71.4}) \\
 & TIES & 86.1 & 62.2 (\textcolor{red}{-27.7}) & 90.8 & 88.2 (\textcolor{red}{-2.8}) & 88.0 & 55.4 (\textcolor{red}{-37.0}) & 95.3 & 64.4 (\textcolor{red}{-32.5}) & 88.0 & 63.2 (\textcolor{red}{-28.2}) & 89.9 & 89.5 (\textcolor{red}{-0.4}) & 96.0 & 94.4 (\textcolor{red}{-1.7}) & 78.9 & 16.9 (\textcolor{red}{-78.6}) \\
 & DARE & 86.1 & 0.0 (\textcolor{red}{-100.0}) & 90.8 & 27.0 (\textcolor{red}{-70.2}) & 88.0 & 31.6 (\textcolor{red}{-64.1}) & 95.3 & 50.5 (\textcolor{red}{-47.0}) & 88.0 & 63.2 (\textcolor{red}{-28.2}) & 89.9 & 47.7 (\textcolor{red}{-47.0}) & 96.0 & 0.0 (\textcolor{red}{-100.0}) & 78.9 & 43.7 (\textcolor{red}{-44.6}) \\
 & SLERP & 86.1 & 82.6 (\textcolor{red}{-4.0}) & 90.8 & 86.2 (\textcolor{red}{-5.0}) & 88.0 & 73.3 (\textcolor{red}{-16.7}) & 95.3 & 88.4 (\textcolor{red}{-7.3}) & 88.0 & 63.2 (\textcolor{red}{-28.2}) & 89.9 & 85.9 (\textcolor{red}{-4.4}) & 96.0 & 94.4 (\textcolor{red}{-1.7}) & 78.9 & 21.1 (\textcolor{red}{-73.2}) \\

\midrule
\multirow{6}{*}{14B} & LA & 88.1 & 81.5 (\textcolor{red}{-7.5}) & 91.9 & 90.1 (\textcolor{red}{-1.9}) & 90.4 & 78.2 (\textcolor{red}{-13.6}) & 95.9 & 89.0 (\textcolor{red}{-7.2}) & 88.7 & 62.4 (\textcolor{red}{-29.7}) & 93.9 & 88.8 (\textcolor{red}{-5.4}) & 97.1 & 95.1 (\textcolor{red}{-2.1}) & 84.5 & 12.7 (\textcolor{red}{-85.0}) \\
 & TA & 88.1 & 81.4 (\textcolor{red}{-7.6}) & 91.9 & 90.2 (\textcolor{red}{-1.9}) & 90.4 & 78.7 (\textcolor{red}{-13.0}) & 95.9 & 89.0 (\textcolor{red}{-7.3}) & 88.7 & 62.4 (\textcolor{red}{-29.7}) & 93.9 & 89.2 (\textcolor{red}{-5.0}) & 97.1 & 95.2 (\textcolor{red}{-2.0}) & 84.5 & 11.3 (\textcolor{red}{-86.7}) \\
 & TIES & 88.1 & 81.6 (\textcolor{red}{-7.4}) & 91.9 & 88.2 (\textcolor{red}{-4.1}) & 90.4 & 81.6 (\textcolor{red}{-9.8}) & 95.9 & 81.5 (\textcolor{red}{-15.1}) & 88.7 & 70.8 (\textcolor{red}{-20.2}) & 93.9 & 88.1 (\textcolor{red}{-6.2}) & 97.1 & 96.1 (\textcolor{red}{-1.1}) & 84.5 & 12.7 (\textcolor{red}{-85.0}) \\
 & DARE & 88.1 & 28.4 (\textcolor{red}{-67.8}) & 91.9 & 57.8 (\textcolor{red}{-37.1}) & 90.4 & 85.5 (\textcolor{red}{-5.4}) & 95.9 & 50.5 (\textcolor{red}{-47.3}) & 88.7 & 76.0 (\textcolor{red}{-14.3}) & 93.9 & 92.1 (\textcolor{red}{-1.9}) & 97.1 & 92.4 (\textcolor{red}{-4.8}) & 84.5 & 19.7 (\textcolor{red}{-76.7}) \\
 & SLERP & 88.1 & 81.4 (\textcolor{red}{-7.6}) & 91.9 & 90.2 (\textcolor{red}{-1.9}) & 90.4 & 78.9 (\textcolor{red}{-12.7}) & 95.9 & 89.2 (\textcolor{red}{-7.0}) & 88.7 & 62.4 (\textcolor{red}{-29.6}) & 93.9 & 88.4 (\textcolor{red}{-5.8}) & 97.1 & 95.2 (\textcolor{red}{-2.0}) & 84.5 & 12.7 (\textcolor{red}{-85.0}) \\

\bottomrule
\end{tabular}

\begin{tablenotes}
\small
\item[*] FT: Fine-tuned model performance; $M(\Delta)$: Performance after merging and the negative number of merging loss following Equation~\ref{eqn::merge_loss}.
\end{tablenotes}
\end{threeparttable}}
\end{table*}

\begin{table}[t]
\centering
\caption{P-values of One-way ANOVA F-tests~\cite{maxwell2017designing} for Task-level and Merging-technique-level Effects. $p < 0.05$ indicates statistical significance.}
\label{table::rq1::pvalue}
\setlength{\tabcolsep}{9pt}
\begin{tabular}{l|cc}
\toprule
\textbf{Task Setting} & \textbf{Merging Tech.} &  \textbf{Task}\\
\midrule
Glue &$0.575$ & $2.357\times 10^{-36}$
\\
Lots-of-LoRAs & $0.987$ & 
$ 1.699 \times 10^{-7}$
 \\
\bottomrule
\end{tabular}
\end{table}

To investigate whether the revealed merging collapse in RQ1 is due to the imperfection of existing merging methods or due to the inherent incompatibility of model checkpoints, in Table~\ref{table::rq3::main} we present a detailed merging results with Qwen2.5 models. It can be observed from Table~\ref{table::rq3::main} that there exists some tasks, such as MRPC and WNLI, seem to suffer severe merging loss across all merging techniques, indicating the choice of merging technique has little impact over performance in certain tasks.

We further conduct statistic tests on how merging technique and merged task are correlated to merging loss. The statistical evidence presented in Table~\ref{table::rq1::pvalue} provides compelling support for task-level incompatibility as the primary cause of merging collapse. Both tables show a stark contrast in 
p-values between merging technique-level and task-level effects. 
These consistent findings across different datasets and model architectures strongly suggest that merging collapse stems primarily from fundamental task incompatibilities rather than limitations in merging techniques. The choice of merging technique appears to have minimal impact on performance compared to the inherent compatibility between task representations, confirming our hypothesis that certain tasks face intrinsic barriers to successful merging regardless of the techniques employed.

\noindent\fcolorbox{black}{gray!20}{
\begin{minipage}{\dimexpr\linewidth-2\fboxrule-2\fboxsep\relax}
\Answer{RQ2}{Statistical analysis reveals that merging collapse is primarily task-dependent rather than method-dependent. The highly significant task-level effects contrasted with non-significant merging technique-level effects across both datasets demonstrate that inherent task incompatibilities, not methodological limitations, are the key to merging collapse.}
\end{minipage}
}

\subsection{RQ3: Correlative Factors for Merging Collapse}

\subsubsection{Theoretical Explanation}
Our empirical results establish a clear pattern, where merging collapse is not determined methodological choices but inherent task incompatibility. 

To shed light on these empirical results with theoretical analysis, we develop a theoretical framework that formalizes the relationship between representational incompatibility and merging collapse.

By modeling the merging process through the lens of rate-distortion theory~\cite{berger2003rate}, we can establish lower bounds on the minimum distortion achievable when combining representations from different tasks. 
Under the assumption of LMC~\cite{frankle2020linear}, which is widely observed in pre-training and fine-tuning paradigm~\cite{zhou2024emergence}, we prove the following theorem.

\begin{theorem}[Hidden–State Diameter Controls Mergeability]
\label{thm:diameter}
Let $\{\,\theta_i\}_{i=1}^N\subset\mathbb R^{p}$ be $N$ fine-tuned minima of the same
base network $F(\cdot;\theta)$, and let $h(x;\theta)\in\mathbb R^{d}$ be a fixed
hidden layer.  Assume {\em linear mode connectivity} (LMC): every convex
combination $\sum_i\alpha_i\theta_i$ ($\alpha_i\!\ge0,\sum\alpha_i=1$) attains
the same training loss $\le\varepsilon$.  Denote
\[
d^2(i,j)\;=\;\E_X\bigl\|h(X;\theta_i)-h(X;\theta_j)\bigr\|_2^2,
\Delta\;=\;\max_{i,j} d(i,j)
\]
and for any candidate merged model $\hat\theta$ define the worst-case hidden-state
distortion
$
\delta_{\max}(\hat\theta)=\max_i\E_X\!\|h(X;\hat\theta)-h(X;\theta_i)\|_2^{2}.
$

(i) (Achievability) There exists a convex merge
$\bar\theta=\sum_{i} \alpha_i\theta_i$ such that
$\delta_{\max}(\bar\theta)\leq\frac{d}{2(d+1)}\Delta^2$.

(ii) (Converse) Every merge satisfies $\delta_{\max}(\hat\theta)\ge\frac14\Delta^2$; hence
$\frac14\Delta^2$ is the minimum attainable distortion $D^\star$.

(iii) Viewing $I\!\sim\!\mathrm{Unif}\{1,\dots,N\}$ and
$Y=h(X;\theta_I)$ as the source, the Shannon rate–distortion curve obeys
$R(D)=0$ iff $D\ge D^\star$, and $R(D)\ge\log_2 N$ for $D<D^\star$.

Consequently, under LMC the single scalar $\Delta$ completely
characterises (a) whether the experts are mergeable within a budget
$D_\text{budget}$ and (b) the zero-rate point of the
rate–distortion function.
\end{theorem}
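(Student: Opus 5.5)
The plan is to work entirely in the Hilbert space $\mathcal H=L^2(P_X;\R^d)$ with inner product $\langle f,g\rangle=\E_X\langle f(X),g(X)\rangle$. In this space each expert is represented by the point $h_i:=h(\cdot;\theta_i)\in\mathcal H$. Then $d(i,j)=\|h_i-h_j\|_{\mathcal H}$ and $\delta_{\max}(\hat\theta)=\max_i\|h(\cdot;\hat\theta)-h_i\|_{\mathcal H}^2$. So the whole theorem reduces to statements about the minimal enclosing ball of the finite set $\{h_1,\dots,h_N\}$ of diameter $\Delta$.

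The one modelling step I must make explicit is how LMC enters. I will read LMC, in the locally-linear sense the paper invokes, as saying that along the convex hull of the minima the hidden layer is affine in the parameters: $h(\cdot;\sum_i\alpha_i\theta_i)=\sum_i\alpha_i h_i$. This is the step I expect to be the real obstacle. Equal training loss alone does not imply it, so I would state it as the operative form of the LMC assumption rather than try to derive it. With it, the set of hidden maps reachable by convex merges is exactly $\mathrm{conv}\{h_i\}$.

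For the converse (ii), I take $i,j$ with $d(i,j)=\Delta$ and any $\hat h\in\mathcal H$. The triangle inequality gives $\Delta\le\|\hat h-h_i\|+\|\hat h-h_j\|\le 2\sqrt{\delta_{\max}}$, hence $\delta_{\max}\ge\Delta^2/4$. This holds for every merge, convex or not.

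For achievability (i), I note that the $N$ points span an affine subspace of dimension at most $N-1$. I then apply Jung's theorem there: the minimal enclosing ball has radius $r\le\Delta\sqrt{m/(2(m+1))}$, where $m$ is that dimension. To obtain the stated $d$, I either restrict to the case where the effective dimension is at most $d$, or use that $m/(2(m+1))$ is increasing in $m$. I will flag this dependence, since in $\mathcal H$ the natural dimension is $\min(N-1,\cdot)$ rather than $d$. Next I use the standard fact that the centre of the minimal enclosing ball lies in $\mathrm{conv}\{h_i\}$; otherwise projecting onto the hull would shrink every distance. The centre is therefore $\sum_i\alpha_i h_i$, and by the linearity reading of LMC it is realised by $\bar\theta=\sum_i\alpha_i\theta_i$. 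This gives $\delta_{\max}(\bar\theta)=r^2\le\frac{d}{2(d+1)}\Delta^2$. Consequently the true optimal value is $D^\star=r^2\in[\Delta^2/4,\frac{d}{2(d+1)}\Delta^2]$. The phrase ``$\frac14\Delta^2$ is the minimum'' should then be read as a lower bound that is attained exactly when the configuration is ``diametral'', for example $N=2$. I will state $D^\star$ as $r^2$ and note this.

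For (iii), I use the worst-case (per-index) distortion criterion matching $\delta_{\max}$. Zero rate means a single reconstruction $\hat y$ must serve all $I$, so $R(D)=0$ iff some point is within $\sqrt D$ of every $h_i$, i.e.\ iff $D\ge D^\star$. For the claim $R(D)\ge\log_2N$ when $D<D^\star$, the clean route is a separation condition: if the balls of radius $\sqrt D$ around the $h_i$ are pairwise disjoint, then $\hat Y$ determines $I$, so $I(I;\hat Y)\ge H(I)=\log_2N$. This is only automatic for $D<\min_{i\ne j}d(i,j)^2/4$. In general I would prove the weaker statement $R(D)>0$ and add the separation hypothesis for the full $\log_2N$. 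Finally, the concluding ``consequently'' follows by sandwiching $D^\star$ between the two $\Delta$-only bounds.
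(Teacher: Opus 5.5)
Your route for (i) differs from the paper's, and yours is the one that works. The paper applies Jung's lemma separately at each input $x$ to $\{H_i(x)\}\subset\R^d$. It then asserts two things: that one fixed $\alpha$ makes $h(x;\bar\theta)$ equal the pointwise centre $c(x)$ for every $x$, and that $\Delta_x\le\Delta$. Neither holds, because the barycentric coordinates of $c(x)$ vary with $x$, and $\E_X\max_{j,k}\|H_j(X)-H_k(X)\|^2$ can exceed $\Delta^2$.

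Lifting to $\mathcal H=L^2(P_X;\R^d)$ removes both problems. The enclosing-ball centre in $\mathcal H$ is a single convex combination $\sum_i\alpha_i h_i$, the diameter there is exactly $\Delta$, and $\delta_{\max}$ at the centre is exactly $r^2$. The price is that the Jung constant depends on the dimension $m\le N-1$ of the affine hull, not on $d$. That is a defect of the statement, not of your argument. Take $d=1$, $N=3$, $X$ uniform on $\{1,2,3\}$, $h(x;\theta)=\theta_x$ (linear, constant loss) and $\theta_i=e_i$:
\begin{itemize}
\item The $h_i$ form an equilateral triangle in $\mathcal H$ with $\Delta^2=2/3$.
\item Hence $D^\star=r^2=\Delta^2/3>\frac14\Delta^2=\frac{d}{2(d+1)}\Delta^2$, which refutes both (i) and ``$D^\star=\frac14\Delta^2$''.
\item Here $\Delta_x=1>\Delta$ for every $x$, and the pointwise centres $c(x)\equiv\frac12$ would need $\alpha_i=\frac12$ for all three $i$.
\end{itemize}
The correct conclusion is yours: $\frac14\Delta^2\le D^\star=r^2\le\frac{m}{2(m+1)}\Delta^2\le\frac{N-1}{2N}\Delta^2$. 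This gives the stated bound under the explicit hypothesis $N-1\le d$; state it that way rather than as ``effective dimension at most $d$''. It also pins down $D^\star$ only up to a factor below $2$, which is the honest reading of the closing ``consequently''. Your converse is the same triangle-inequality argument the paper uses. Like the paper, you need hidden-state linearity along the hull, and you are right to take it as an assumption rather than derive it from equal loss.

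For (iii), the paper only appeals informally to Shannon's converse. Your worst-case formulation makes ``$R(D)=0$ iff $D\ge D^\star$'' rigorous. Your separation rescue of the $\log_2N$ clause, however, needs the distortion constraint to hold almost surely over the test channel, for example with deterministic reconstructions. Suppose instead the per-index constraint holds only in conditional expectation, as ``matching $\delta_{\max}$'' suggests. Then outputting $h_I$ with probability $1-p$ and the centre with probability $p=D/D^\star$ satisfies it, with $I(I;\hat Y)\le(1-D/D^\star)\log_2N<\log_2N$ for every $0<D<D^\star$, regardless of separation. More generally, $R$ is then convex and hence cannot jump from $\log_2N$ to $0$. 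So fix the almost-sure criterion explicitly if you want the $\log_2N$ statement under your separation hypothesis.
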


\begin{proof}[Sketch]
Since LMC implies {\em linearity of hidden states in parameter space},
the hidden representation of any convex merge lies in the convex hull of
$\{h(\cdot;\theta_i)\}$.
The minimum enclosing ball of a finite set in
$\mathbb R^{d}$ has radius less than $\frac{d}{2(d+1)}$ of its diameter (Jung's Theorem~\cite{jung1901ueber}).  
Details are in
our Appendix~\ref{app:diameter-proof}. \qedhere
\end{proof}

\subsubsection{Hidden-state Distance Similarity.}  Theorem~\ref{thm:diameter} implies that the representations' distances between different models fine-tuned on different tasks can be an effective metric to capture the task-level merging collapse. 
Inspired by theorem~\ref{thm:diameter}, we propose Hidden-state Distance Similarity which is calculated based on the average L2-distance between the hidden states of different models processing a same set of input, referred as $d_{i,j}$.
    Assuming we calculate hidden-state distance similarity over a group of models $\{1,2,...,n\}$ and their hidden states' L2-distances $D = d_{1,2},d_{1,3},...d_{n-1,n}$, hidden-state distance similarity is calculated through:
    \begin{equation}
        HiddenSim(i,j) = 
        \begin{cases}
        1 & i=j \\
        
        \frac{max(D)-d_{i,j}}{max(D) - min(D)}
        & i\ne j
        \end{cases}
    \end{equation}
    , where $HiddenSim$ stands for the Hidden-state distance similarity.
    In our experiments measuring hidden state distances, we draw $k=5$ datapoints from every task's dataset and compose them into a validation dataset. 
By running models on the validation dataset, we compare their hidden states of the last layer by calculating the normalized L2 distance averaged across each datapoint.
    
\subsubsection{Empirical results on Correlation when merging pairs of models}
\begin{table*}[t]
\centering
\caption{P-values of Pearson correlation coefficient~\cite{benesty2009pearson} on how conflict metrics are correlated to merging loss with different merging techniques when merging two model checkpoints. $p < 0.05$ indicates statistical significance.}
\label{table::rq1::4metric}
\setlength{\tabcolsep}{4pt}

\begin{tabular}{c|ccccc}
\toprule
\multirow{2}{*}{\textbf{Conflict Metrics}} & \multicolumn{5}{c}{\textbf{Merging Techniques}} \\
\cmidrule{2-6}
& \textbf{TIES} & \textbf{LA} & \textbf{DARE} & \textbf{SLERP} & \textbf{TA} \\
\midrule
Parameter Magnitude Change Ratio & 0.192 & 0.098 & 0.834 & 0.085 & 0.186 \\
Parameter Sign Change Ratio & 0.379 & 0.460 & 0.882 & 0.408 & 0.659 \\
Conflicting Parameter Magnitude Change Ratio & 0.170 & 0.170 & 0.979 & 0.156 & 0.235 \\
Average Cosine Similarity & 0.262 & 0.459 & 0.635 & 0.444 & 0.296 \\
\midrule
\textbf{Hidden State Distance Similarity }& \textbf{0.006} & \textbf{0.001} & 0.145 & \textbf{0.001} & \textbf{0.006} \\
\bottomrule
\end{tabular}
\end{table*}

To validate our theoretical analysis with empirical results, we first investigate the correlation between conflict metrics and merging collapse. 

\noindent\textbf{Experiment Setting.} 
As the conflict metrics are defined between two models, we perform additional experiment merging all the pairs ($C_8^2=28$ pairs) of Qwen2.5-3B checkpoints on GLUE dataset across five merging techniques. Though merging two models is much easier than merging eight, merging collapse persists in certain task combinations.

\noindent\textbf{Results.} Contrary to intuitive expectations, the widely adopted parameter update conflicts we introduced in Section~\ref{sec:preliminary} exhibit remarkably weak correlations with merging collapse across all four parameter update conflict metrics. This absence of correlation of existing parameter update conflict metrics is particularly evident in Table~\ref{table::rq1::4metric}, where we calculate the p-values of Pearson correlation coefficient~\cite{benesty2009pearson} to analyze how existing parameter conflict metrics are correlated to merging loss. With all p-values $> 0.05$, though we can't sufficiently conclude that existing parameter conflict metrics has completely no correlation with merging loss, it is inarguable that they perform poorly to capture the underlying mechanisms driving merging collapse, further indicating that the phenomenon likely emerges from more complex, task-related incompatibilities rather than straightforward parameter update disagreements.

Our hidden state distance similarity metric, on the other hand, shows a strong correlation with merging collapse. Except for DARE, which is found to have unstable merging performance like discussed in RQ1, all p-values of Pearson correlation coefficient are much less than the threshold of $p=0.05$. Even for DARE, our metric keeps a drastic advantage over existing metrics in correlation. Our metric corresponds significantly better to the actual merging performance than existing parameter update conflicts.

\subsubsection{Empirical results on hiddensim for scaled merging.}

We further extend our theory explanation to the situation where more models are merged together.

\noindent\textbf{Experiment Setting.}  We perform our empirical investigation on four 8-task groups: one task group consisting of 8 Qwen2.5-3B checkpoints fine-tuned on GLUE tasks, and three task groups (a), (b), (c), each containing of 8 Lots-of-LoRAs tasks.

\noindent\textbf{Visualization.} 
Figure~\ref{fig:heatmap_glue} demonstrates a strong correlation between hidden state similarity and merging performance in merging eight Qwen2.5-3B checkpoints in GLUE tasks, where most tasks show high similarity scores with other tasks, while certain tasks like QQP and WNLI exhibit significantly lower scores with other tasks, explaining their poor merging outcomes in a certain way. This pattern extends to LoRA task groups, where we adopt similar experiments on three Lots-of-LoRAs task groups (a), (b), (c) and find out similar correlation between lower similarity score and awful merging performance or even merging collapse.

\noindent\textbf{Quantitative Analysis.} To better understand how hidden state similarity score is correlated to merging performance and conduct quantitative analysis, we define the \textit{Merging Difficulty Score (MDS)} for each task $i$ as the reciprocal of average representational similarity scores with other tasks ($j\neq i$) : 
\begin{equation}
MDS_i = \frac{1}{\text{Average}_{j\neq i}(HiddenSim(i,j))}
\end{equation} 
This metric, inspired by the resistance in physics, provides a non-linear measure of similarity scores amplifying the impact of low similarities where higher MDS indicates worse mergeability (higher merging loss), i.e., greater resistance to merging.

In Table~\ref{mds1}\texttildelow~\ref{mds4}, we present the calculated MDS value and bold the columns with the most severe merging collapse across all merging techniques. Without exception, all tasks with merging collapse have a significantly larger MDS. A statistical analysis is also shown in Table~\ref{table::mds::correlation}, which shows the p-values of Pearson correlation coefficients between every task's MDS and \textit{best} merging loss across different merging techniques. Among all four eight-task groups, all p-values are lower than the threshold $p=0.05$. Results in these tables support our expectation of MDS that higher MDS indicates worse mergeability, providing further empirical evidence of our theory.

  \begin{minipage}{1\linewidth}
    \centering
 \includegraphics[width=1.0\linewidth]{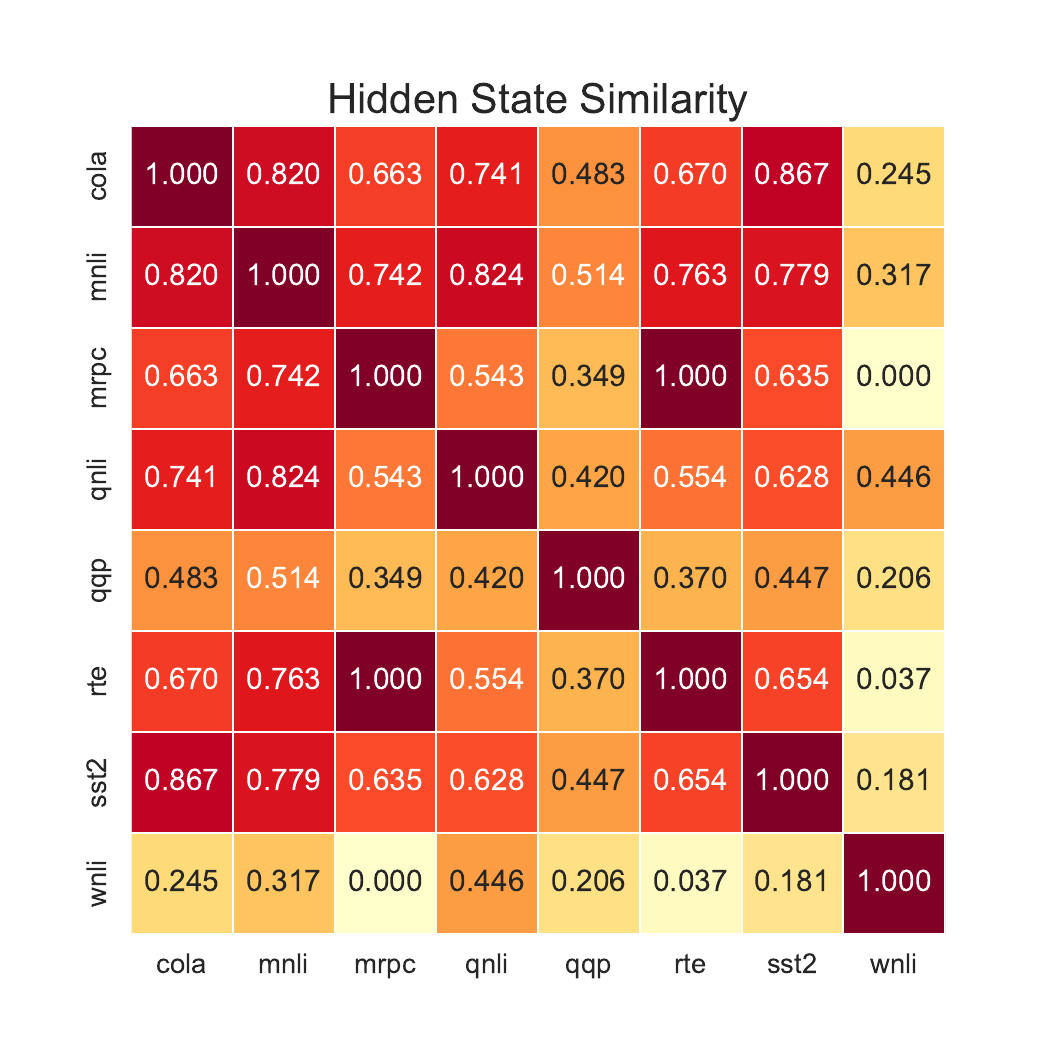}
\captionof{figure}{Heatmap of similarity score of hidden states in GLUE tasks.}
  \label{fig:heatmap_glue}
  \end{minipage}%

  \begin{minipage}{1\linewidth}
    \centering
 \captionof{table}{P-values of Pearson correlation coefficients.
on how MDS is correlated to the \textit{best} merging loss in merging Qwen2.5-3B checkpoints on GLUE and Lots-of-LoRAs task groups (a), (b) and (c) in experiments. 
$p < 0.05$ indicates statistical significance.
}
\begin{tabular}{c|c}
\toprule
\textbf{Merging tasks}  & \textbf{p-value} \\
\midrule
GLUE & \textbf{0.0032 $\ll$ \textit{0.05}} \\
group (a) & \textbf{0.0013 $\ll$ \textit{0.05}} \\
group (b) & \textbf{0.0288 $<$ \textit{0.05}} \\
group (c)  & \textbf{0.0059 $\ll$ \textit{0.05}} \\
\bottomrule
\end{tabular}
\label{table::mds::correlation}

  \end{minipage}

These findings from both intuitive figures and quantitative analysis confirm that representational compatibility at the hidden state level, rather than parameter update conflicts, is more correlated with merging success or collapse, validating our theoretical analysis that merging collapse stems from differences in model representations.

\begin{table}[t]
\footnotesize
\centering
\caption{Comparison among three task groups.}
\label{table::guide}
\setlength{\tabcolsep}{6pt}
\begin{tabular}{l|c|cc}
\toprule
 & \textbf{(a)} & \textbf{(a1)} & \textbf{(a2)} \\
\midrule
Worst Loss & -70.8 & -17.8 & -19.1 \\
\hline
Worst MDS & 5.05 & 2.52 & 2.89 \\
\bottomrule
\end{tabular}
\small
\end{table}

\noindent\textbf{Merging Tasks Selection Guiding.} Merging models with higher similarity scores could lead to better merging performance. In group (a), the 6-th task incurs significant merging collapse. Using our MDS metric as a guide, we strategically replace this task with more compatible alternatives from Lots-of-LoRA tasks and construct task gropus (a1) and (a2). Comparing their merging performance with TIES as shown in Table~\ref{table::guide}, we find that merging tasks with lower MDS do achieve lesser merging collapse, further supporting our theory explanation and demonstrating the potential usefulness of our findings.

\begin{minipage}[t]{1\linewidth}
        \centering

\resizebox{1\columnwidth}{!}{
\begin{threeparttable}
\centering
\caption{Detail data when merging 8 checkpoints from GLUE dataset.}

\label{mds1}
\setlength{\tabcolsep}{4pt}
\begin{tabular}{c|ccccccccc}
\toprule

LA & -2.1 & -2.4 & -15.6 & -8.2 & -25.7 & -5.6 & -1.8 & \textbf{-48.1} \\
TA & -1.9 & -2.5 & -15.6 & -8.1 & -25.7 & -5.2 & -1.9 & \textbf{-48.1} \\
TIES & -4.3 & -0.7 & -6.6 & -26.9 & -24.9 & -10.4 & -22.7 & \textbf{-30.8} \\
\midrule
MDS & 1.56 & 1.47 & 1.78 & 1.68 & 2.51 & 1.73 & 1.72 & \textbf{4.89} \\

\bottomrule
\end{tabular}
\end{threeparttable}}
\end{minipage}
\begin{minipage}[t]{1\linewidth}
        \centering
\resizebox{1\columnwidth}{!}{
\begin{threeparttable}
\centering
\caption{Detail data when merging group (a) tasks from Lots-of-LoRAs.}
\label{MDS2}

\setlength{\tabcolsep}{4pt}
\begin{tabular}{c|ccccccccc}
\toprule
LA & -9.7 & -6.8 & -0.6 & 17.0 & -4.9 & \textbf{-78.0} & -4.6 & -16.5  \\
TA & -17.9 & -18.7 & -3.7 & 12.8 & -6.3 & \textbf{-78.6} & -6.4 & -11.4  \\
TIES & -10.8 & -6.0 & -3.7 & -0.9 & -5.6 & \textbf{-70.8} & -3.1 & -15.5  \\
\midrule
MDS & 2.85 & 1.99 & 2.90 & 1.64 & 1.68 & \textbf{5.05} & 1.98 & 2.65 \\
\bottomrule
\end{tabular}
\end{threeparttable}}
\end{minipage}

\begin{minipage}[t]{1\linewidth}
        \centering
\resizebox{1\columnwidth}{!}{
\begin{threeparttable}
\centering
\caption{Detail data when merging group (B) tasks from Lots-of-LoRAs.}
\label{mds3}

\setlength{\tabcolsep}{4pt}
\begin{tabular}{c|ccccccccc}
\toprule

LA & 2.5 & -9.1 & -30.0 & -11.7 & -10.7 & \textbf{-75.8} & -40.2 & 0.7  \\
TA & -16.0 & -16.3 & -57.5 & -14.7 & -17.6 & \textbf{-78.9} & -44.7 & -9.9  \\
TIES & -4.1 & -6.8 & -39.1 & -9.8 & -14.5 & \textbf{-68.2} & -54.8 & -0.9  \\
\midrule
MDS &2.22 & 2.78 & 1.62 & 2.19 & 1.45 & \textbf{8.15} & 1.94 & 1.43 \\

\bottomrule
\end{tabular}
\end{threeparttable}}
\end{minipage}
\begin{minipage}[t]{1\linewidth}
        \centering
\resizebox{1\columnwidth}{!}{
\begin{threeparttable}
\centering
\caption{Detail data when merging group (C) tasks from Lots-of-LoRAs.}
\label{mds4}

\setlength{\tabcolsep}{4pt}
\begin{tabular}{c|ccccccccc}
\toprule
LA & 2.9 & -15.2 & \textbf{-62.8} & -18.6 & -13.1 & -5.8 & -8.3 & -1.0  \\
TA & -7.3 & -40.7 &\textbf{ -67.6 }& -22.3 & -10.4 & 26.3 & -35.5 & -15.7  \\
TIES & -0.1 & -32.6 & \textbf{-57.1} & -5.9 & -14.7 & 6.6 & -4.5 & -1.0  \\
\midrule
MDS & 1.45 & 1.30 & \textbf{3.75} & 1.23 & 1.70 & 1.23 & 1.24 & 1.28 \\
\bottomrule
\end{tabular}
\end{threeparttable}}
\end{minipage}

\noindent\fcolorbox{black}{gray!20}{
\begin{minipage}{\dimexpr\linewidth-2\fboxrule-2\fboxsep\relax}
\Answer{RQ3}{While parameter-level conflicts show little correlation with merging collapse across multiple conflict calculation methods, hidden state similarity strongly correlates with merging success, with higher representational compatibility corresponding to better merged performance. This confirms that merging collapse stems from fundamental differences in task representations rather than parameter disagreements, explaining why catastrophic degradation exists across different merging techniques.}
\end{minipage}
}

\section{Related Work}\label{sec::related}

\subsection{Development of LLMs}
LLMs are increasingly being deployed across a wide range of application scenarios~\cite{bommasani2021opportunities}.
These application scenarios span a spectrum of specialized domains, ranging from intelligent customer service systems requiring precise query resolution~\cite{xiaoliang2024design, pandya2023automating} to clinical decision-support tools in healthcare diagnostics~\cite{abbasian2023conversational, zhang2024llm}, from real-time risk assessment frameworks in financial operations~\cite{zhao2024revolutionizing, hasan2020current} to adaptive learning platforms for personalized educational recommendations~\cite{urdaneta2021recommendation}.
This substantial diversity in task requirements and operational contexts poses unique challenges when adapting general-purpose LLMs to domain-specific downstream tasks.
To enhance targeted capabilities of LLMs in specialized applications, fine-tuning has emerged as a predominant methodological paradigm~\cite{yang2019xlnet,brown2020language,raffel2023exploringlimitstransferlearning}, enabling the alignment of pre-trained knowledge with task-specific objectives on curated domain datasets.
The pre-training and fine-tuning paradigm makes linear mode connectivity~\cite{nagarajan2019uniform, frankle2020linear}—where solutions in parameter space can be linearly interpolated while maintaining performance—as an emergent property of fine-tuned models~\cite{qin2022exploring,zhou2024emergence}.

\subsection{Model Merging}
There is a growing body of work on post-training model merging~\cite{wortsman2022model,zhang2019malmem,ilharco2023editingmodelstaskarithmetic,chitale2023taskarithmeticloracontinual,ortizjimenez2023taskarithmetictangentspace,yadav2024ties,yu2024language}, which focuses on combining pre-trained models that have been fine-tuned on different tasks. 
Linear averaging approaches~\cite{wortsman2022model,zhang2019malmem} average parameters of multiple
fine-tuned models element-wise. 
Task arithmetic~\cite{ilharco2023editingmodelstaskarithmetic,chitale2023taskarithmeticloracontinual,ortizjimenez2023taskarithmetictangentspace} leverages task-specific weight vectors, to represent weights for specific tasks to facilitate direct manipulation of multi-task learning by adding or subtracting on task vectors.
These approaches are vulnerable to parameter update conflicts~\cite{chen2024you,kong2024activated}.
TIES~\cite{yadav2024ties} tries to reduce parameter update conflicts with heuristics such as trimming minor updates to avoid redundant updates~\cite{zbontar2021barlow}.
DARE~\cite{yu2024language} improves upon TIES by re-scaling the remaining parameters to keep the distributional stability. 
Twin-merging~\cite{lu2024twin} and AdaMerging~\cite{yang2023adamerging} relaxes the storage constraints to improve the model performance after merging. 
Despite these advances, a fundamental understanding of merge conflicts remains underdeveloped.

\section{Conclusion}\label{sec::conclusion}
In this paper, we have investigated the fundamental limitations of model merging and identified the key factors that determine merging success or failure. Our research reveals that task-level representational incompatibility is the primary driver of merging collapse, with certain task combinations consistently failing across all merging methods. This finding challenges the conventional wisdom that parameter conflicts are the main obstacle to successful merging.
We further develop a theoretical framework based on rate-distortion theory that establishes relationship between model representation distances and merging collapse, verified by empirical evidence.

\clearpage

\bibliographystyle{icml2026}
\bibliography{ref}

\clearpage

\section*{Acknowledgments}
Tao Xie is with the Key Laboratory of High Confidence Software Technologies (Peking University), Ministry of Education; School of Computer Science, Peking University, Beijing, China; Beijing Tongming Lake Information Technology Application Innovation Center; Fudan University Institute of Systems for Advanced Computing, Shanghai, China; Shanghai Institute of Systems for Open Computing, Shanghai, China.
This work was partially supported by the National Natural Science Foundation of China under Grant No. 92464301, U25A6023.
Dezhi Ran is partially supported by a Hunyuan Scholar Award.

\section{Proof of Theorem~\ref{thm:diameter}}
\label{app:diameter-proof}

\begin{lemma}[Jung's Theorem~\cite{jung1901ueber}]
\label{lem:minball}
For any finite set $\mathcal S\subset\R^{d}$ with diameter
$\operatorname{diam}(\mathcal S)=
\max_{u,v\in\mathcal S}\norm{u-v}_2=\Delta$,
there exists a point $c\in\mathrm{conv}(\mathcal S)$ such that
\[
   \max_{s\in\mathcal S}\|c-s\|_2 \;\le\; \sqrt{\frac{d}{2(d+1)}}\,\Delta.
\]

\end{lemma}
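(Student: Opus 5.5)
The plan is the classical argument via the minimum enclosing ball, an optimality (KKT-type) condition, and a weighted averaging identity.

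\textbf{Step 1: a minimizing center exists and lies in $\mathrm{conv}(\mathcal S)$.} Consider $f(c)=\max_{s\in\mathcal S}\|c-s\|_2$. It is continuous, so it attains a minimum $r$ on the compact set $\mathrm{conv}(\mathcal S)$ at some point $c$. This is also a global minimizer over $\R^d$. Indeed, for any $y\in\R^d$ let $\pi(y)$ be its Euclidean projection onto the convex set $\mathrm{conv}(\mathcal S)$. Then $\|\pi(y)-s\|_2\le\|y-s\|_2$ for every $s\in\mathrm{conv}(\mathcal S)$, hence $f(\pi(y))\le f(y)$. Let $T=\{s\in\mathcal S:\|c-s\|_2=r\}$ be the set of contact points.

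\textbf{Step 2: optimality condition $c\in\mathrm{conv}(T)$.} I expect this to be the main obstacle, and I would argue by contradiction. Suppose $c\notin\mathrm{conv}(T)$. Separate $c$ from the compact convex set $\mathrm{conv}(T)$ by a hyperplane; this gives a unit vector $u$ with $u\cdot(s-c)>0$ for all $s\in T$. For small $t>0$ and each $s\in T$,
\[
\|c+tu-s\|_2^2=r^2-2t\,u\cdot(s-c)+t^2<r^2 .
\]
Points of $\mathcal S\setminus T$ have distance strictly less than $r$ from $c$, and by continuity and finiteness of $\mathcal S$ they remain at distance $<r$ from $c+tu$ for small $t$. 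Hence $f(c+tu)<r$, which contradicts global minimality. By Carathéodory's theorem we can then write $c=\sum_{i=1}^m\lambda_i s_i$ with $s_i\in T$, $\lambda_i>0$, $\sum_i\lambda_i=1$, and $m\le d+1$.

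\textbf{Step 3: the averaging identity.} Translate so that $c=0$; then $\|s_i\|_2=r$ for each $i$. For each fixed $k$,
\[
\sum_{i}\lambda_i\|s_i-s_k\|_2^2=\sum_i\lambda_i\bigl(2r^2-2\,s_i\cdot s_k\bigr)=2r^2-2\Bigl(\sum_i\lambda_i s_i\Bigr)\cdot s_k=2r^2 .
\]
The $i=k$ term vanishes and every other term is at most $\Delta^2$, so $2r^2\le(1-\lambda_k)\Delta^2$. Summing over $k=1,\dots,m$ gives $2mr^2\le(m-1)\Delta^2$, that is,
\[
r^2\le\frac{m-1}{2m}\,\Delta^2\le\frac{d}{2(d+1)}\,\Delta^2,
\]
where the last inequality uses $m\le d+1$ and the fact that $(m-1)/m$ is increasing in $m$.

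\textbf{Conclusion.} Since $c\in\mathrm{conv}(\mathcal S)$ and $\max_{s\in\mathcal S}\|c-s\|_2=r$, this proves Lemma~\ref{lem:minball}. The degenerate case $|\mathcal S|=1$ is trivial, since then $r=0$.
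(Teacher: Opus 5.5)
Your proof is correct. It uses the same witness as the paper, the centre of the minimum enclosing ball. The difference is that the paper does not really prove the lemma. It only asserts that this centre ``has the stated properties'', taking Jung's bound as known, and cites the literature for the fact that the centre lies in $\mathrm{conv}(\mathcal S)$.

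You instead give the classical self-contained proof:
\begin{itemize}
\item existence of a global minimizer via projection onto $\mathrm{conv}(\mathcal S)$;
\item the optimality condition $c\in\mathrm{conv}(T)$ via separation and a first-order perturbation;
\item Carath\'eodory's theorem;
\item the weighted identity $\sum_i\lambda_i\|s_i-s_k\|_2^2=2r^2$.
\end{itemize}
Your Step 2 in fact gives the stronger statement $c\in\mathrm{conv}(T)\subseteq\mathrm{conv}(\mathcal S)$, which is exactly the ``classic result'' the paper outsources. So the projection argument in Step 1 is only needed to secure a global minimizer, which coercivity of $f$ would also give.

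The paper's route buys brevity. Yours buys verifiability and a sharper constant. Since the Carath\'eodory points can be taken distinct, $m\le\min(|\mathcal S|,d+1)$, so you actually obtain $r^2\le\frac{n-1}{2n}\Delta^2$ with $n=\min(|\mathcal S|,d+1)$. In the paper's application, $\mathcal S$ consists of at most $N\ll d$ expert hidden states. There your argument gives $\frac{N-1}{2N}\Delta^2$ rather than $\frac{d}{2(d+1)}\Delta^2$, and for $N=2$ it reduces to the midpoint bound $\frac14\Delta^2$.
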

\begin{proof}
The centre of the minimum enclosing ball of $\mathcal S$
has the stated properties and lies in the convex hull
(by classic results on the smallest enclosing ball
in Euclidean space~\cite{megiddo1983linear}).
\end{proof}

\begin{proof}[Proof of Theorem~\ref{thm:diameter}]
Throughout we abbreviate $H_i(x)=h(x;\theta_i)$ for the unique input $x$.

\paragraph{Step 1: Achievability ($D^\star\le\frac{d}{2(d+1)}\,\Delta^2$).}
Fix any coefficients $\alpha=\{\alpha_i\}_{i=1}^{N}$
with $\alpha_i\ge0,\ \sum_i\alpha_i=1$.
LMC ensures that the hidden state of their convex merge satisfies
\begin{equation}\label{eq:linear-h}
h(x;\bar\theta)=h\!\Bigl(x;\sum_i\alpha_i\theta_i\Bigr)
        =\sum_i\alpha_i\,h(x;\theta_i)
        =\sum_i\alpha_i H_i(x)\;\in\mathrm{conv}\{H_i(x)\}_{i}.
\end{equation}
Apply Lemma~\ref{lem:minball} to the finite set
$\{H_i(x)\}_{i}$: choosing {\em the same} convex coefficients
$\alpha$ ensures that $h(x;\bar\theta)$ coincides with
the centre $c(x)$ of the minimum enclosing ball, hence
$\norm{h(x;\bar\theta)-H_i(x)}_2\le \sqrt{\frac{d}{2(d+1)}}\,\Delta_x$
for every $i$, where
$\Delta_x=\max_{j,k}\norm{H_j(x)-H_k(x)}_2\le\Delta$.
Taking expectations and the maximum over $i$ yields
$\delta_{\max}(\bar\theta)\le \frac{d}{2(d+1)}\,\Delta^2$.

\paragraph{Step 2: Converse ($D^\star\ge\tfrac14\,\Delta^{2}$).}
Let $\hat\theta$ be any merged model and choose
indices $(i,j)$ that realise the diameter
$d(i,j)=\Delta$.  The triangle inequality gives
\[
d(i,j) \;=\;\sqrt{\E_X \norm{H_i(X)-H_j(X)}_2^{2}}
          \;\le\;\sqrt{\delta_i(\hat\theta)}+\sqrt{\delta_j(\hat\theta)}
          \;\le\;2\sqrt{\delta_{\max}(\hat\theta)}.
\]
Squaring both sides implies
$\delta_{\max}(\hat\theta)\ge\tfrac14\,\Delta^{2}$,
proving the lower bound.

\paragraph{Step 3: Rate–distortion statement.}
Let the random pair $(X,I)$ be distributed according to
$P_X\times\mathrm{Unif}\{1,\dots,N\}$, and define
$Y=h(X;\theta_I)$.  A {\em zero-rate} encoder cannot send any
information about $I$, so the decoder must output a reconstruction
$\hat Y$ that is independent of $I$ and is therefore produced by a
single parameter vector $\hat\theta$.  The minimum achievable
mean-squared error is precisely $D^\star$.  Shannon’s converse then
asserts that $R(D)=0$ iff $D\ge D^\star$, while for $D<D^\star$ at
least $\log_2 N$ bits are necessary to transmit the expert identity,
establishing the claimed $R(D)$ behaviour.

\paragraph{Step 4: Putting the pieces together.}
Steps 1 and 2 show $\tfrac14\,\Delta^{2}\le D^\star\le\frac{d}{2(d+1)}\,\Delta^2$ and identify any convex
merge (including the uniform average)
as optimal.  Step~3 connects this constant to the fundamental
rate–distortion limit, completing the proof.
\end{proof}

\begin{corollary}[Practical mergeability test]
Given a distortion budget $D_{\text{budget}}$, a set of models is
mergeable under LMC iff their hidden–state diameter $\Delta$
satisfies $\frac{d}{2(d+1)}\,\Delta^2\le D_{\text{budget}}$. Since the hidden dimension $d$ is typically large in language models, $\frac{d}{2(d+1)}$ can be approximated by $\tfrac{1}{2}$.
\end{corollary}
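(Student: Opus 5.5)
The plan is to read ``mergeable within $D_{\text{budget}}$'' as in Theorem~\ref{thm:diameter}: there is a convex merge $\bar\theta$ with $\delta_{\max}(\bar\theta)\le D_{\text{budget}}$. Both directions then come from Theorem~\ref{thm:diameter} together with the tightness case of Lemma~\ref{lem:minball}.

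\emph{Sufficiency.} Suppose $\frac{d}{2(d+1)}\Delta^2\le D_{\text{budget}}$. Theorem~\ref{thm:diameter}(i) gives a convex merge $\bar\theta=\sum_i\alpha_i\theta_i$ with $\delta_{\max}(\bar\theta)\le\frac{d}{2(d+1)}\Delta^2$. Hence $\delta_{\max}(\bar\theta)\le D_{\text{budget}}$, and this direction is immediate.

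\emph{Necessity.} This is the main obstacle. Theorem~\ref{thm:diameter}(ii) only shows $\delta_{\max}\ge\frac14\Delta^2$, which leaves a gap between $\frac14$ and $\frac{d}{2(d+1)}$. So the ``only if'' cannot hold for every individual configuration. I would instead prove it in the worst-case sense over all expert families with hidden-state diameter $\Delta$, which is the only sense in which a test depending on $\Delta$ alone can be an equivalence.

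\begin{itemize}
\item Take $N=d+1$ experts whose hidden states are independent of $x$ and sit at the vertices of a regular simplex of edge length $\Delta$ in $\mathbb R^d$.
\item Under LMC, any convex merge has hidden state $c=\sum_i\alpha_iH_i\in\mathrm{conv}\{H_i\}$, so $\delta_{\max}=\max_i\|c-H_i\|_2^2$.
\item For the regular simplex, the minimum of $\max_i\|c-H_i\|_2$ is the circumradius $\sqrt{\tfrac{d}{2(d+1)}}\,\Delta$. This is exactly the equality case of Jung's theorem, and it can be checked directly by noting that the centroid is equidistant from all vertices and that moving it increases the distance to some vertex.
\item Therefore, if $D_{\text{budget}}<\frac{d}{2(d+1)}\Delta^2$, this configuration is not mergeable. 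So the condition is necessary for a guarantee that holds uniformly over all families of diameter $\Delta$.
\end{itemize}

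If a per-instance equivalence is wanted instead, the only available route would be to replace $\Delta$ by the actual minimum-enclosing-ball radius. I would state this caveat explicitly rather than claim more.

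\emph{Approximation.} Write $\frac{d}{2(d+1)}=\frac12-\frac{1}{2(d+1)}$. The error relative to $\frac12$ is $O(1/d)$, which is negligible for hidden sizes $d$ in the thousands, as in the models of Section~\ref{sec::setup}. Hence the practical threshold is $\Delta^2\lesssim 2D_{\text{budget}}$.
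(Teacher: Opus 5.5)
Your proof is correct for the reformulated claim you state, and it takes a different route from the paper; two steps need tightening, covered in the second paragraph.

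\textbf{Comparison with the paper.} Your sufficiency direction is exactly what the paper relies on. The corollary has no proof of its own and is read off Theorem~\ref{thm:diameter}(i); the theorem's closing sentence and Step~4 of the appendix assert that $\Delta$ ``completely characterises'' mergeability. For the ``only if'' the paper offers nothing beyond the bracket $\tfrac14\Delta^2\le D^\star\le\frac{d}{2(d+1)}\Delta^2$, and that bracket does not imply it. Your necessity argument is therefore new. Rather than a per-instance converse, you use the equality case of Jung's theorem (the regular simplex) to show that $\frac{d}{2(d+1)}$ is the best constant any test depending on $\Delta$ alone can use. This is the honest version of the claim, and it is robust. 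Your centroid argument bounds $\max_i\norm{c-H_i}_2$ from below for every $c\in\R^d$. Since $\E\norm{Z-H_i}_2^2\ge\norm{\E Z-H_i}_2^2$, the same bound holds for any input-dependent merged state $Z$. So necessity never uses the paper's unproved passage from LMC (a statement about losses) to linearity of hidden states. The paper's formulation buys a clean per-instance ``iff'', but that statement is not true.

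\textbf{Two points to tighten.} First, ``the theorem leaves a gap, so the `only if' cannot hold'' is a non sequitur: a gap between proved bounds refutes nothing. Give the instance explicitly. For $N=2$, under the paper's linearity the midpoint merge has $\delta_{\max}=\tfrac14\Delta^2$ exactly. So for $d\ge2$, any budget in $[\tfrac14\Delta^2,\frac{d}{2(d+1)}\Delta^2)$ makes this pair mergeable although it fails the test. Paired with your simplex (same $\Delta$, not mergeable at such budgets), this is what actually justifies your remark that no threshold on $\Delta$ alone can be a per-instance equivalence.

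Second, pin down the class over which you take the worst case. Your witness needs $N=d+1$ experts. For a fixed family size $N$, the maps $H_i$ are $N$ points of $L^2(P_X;\R^d)$ spanning an affine space of dimension at most $N-1$. Jung's theorem applied there gives the smaller worst case $\frac{N-1}{2N}\Delta^2$, e.g.\ $\frac{7}{16}\Delta^2$ for the paper's $N=8$.

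The same observation limits your sufficiency step, which inherits Theorem~\ref{thm:diameter}(i). For $N>d+1$ with input-dependent hidden states, (i) fails. Take $d=1$, $X$ uniform on two points, and three experts forming an equilateral triangle in $L^2$; then $D^\star=\tfrac13\Delta^2>\tfrac14\Delta^2$. The appendix proof misses this for two reasons:
\begin{itemize}
\item it picks $x$-dependent coefficients to hit the centre $c(x)$, which a single merge cannot do;
\item it uses $\Delta_x\le\Delta$, which is false for the root-mean-square diameter.
\end{itemize}
If you apply Jung's theorem in $L^2$ instead, your two directions combine into a correct statement. Over all families of at most $d+1$ experts (or all families with input-independent hidden states), $\frac{d}{2(d+1)}\Delta^2\le D_{\text{budget}}$ is the sharp uniform mergeability test.
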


\clearpage
\end{document}